\documentclass{article}

     \usepackage[nonatbib,final]{neurips_2019}

\usepackage[utf8]{inputenc} %
\usepackage[T1]{fontenc}    %
\usepackage{hyperref}       %
\usepackage{url}            %
\usepackage{booktabs}       %
\usepackage{amsfonts}       %
\usepackage{nicefrac}       %
\usepackage{microtype}      %
\usepackage{booktabs}

\usepackage{algorithm, algorithmic,subcaption, tikz}
\usepackage{amsfonts, dsfont, amssymb, amsthm, amsmath}

\newcommand{\ignore}[1]{}

\DeclareMathOperator*{\argmin}{\arg\!\min}

\newif\ifshowanswer    

\newcommand{\isitthree}[1]
{
  \ifnum#1=3
    number #1 is 3
  \else
    number #1 is not 3
  \fi
}


\newtheorem{theorem}{Theorem}[section]

\newtheorem{lemma}[theorem]{Lemma}

\newcommand{\Var}{\mathrm{Var}}

\newcommand{\be}{\begin{equation}}
\newcommand{\ee}{\end{equation}}




\newcommand\R{{\mathbb{R}}}

\renewcommand\P{{\mathds{P}}}
\newcommand\E{{\mathds{E}}}

\newcommand\p{{\mathbf p}}

%


\newcommand\CJ{{\mathcal J}}

\newcommand\CU{{\mathcal U}}






\newtheorem{note}{Remark}

\newtheorem{defn}{Definition}
\usepackage{multirow}
\usepackage{appendix,cleveref}

\usepackage[justification=centering]{caption}

\title{Ultra Fast Medoid Identification\\via Correlated Sequential Halving}

\author{%
  Tavor Z. Baharav\\
  Department of Electrical Engineering\\
  Stanford University\\
  Stanford, CA 94305\\
  \texttt{tavorb@stanford.edu} \\
   \And
   David Tse \\
   Department of Electrical Engineering\\
   Stanford University\\
   Stanford, CA 94305 \\
   \texttt{dntse@stanford.edu} \\
}

\begin{document}

\maketitle

\begin{abstract}
The medoid of a set of $n$ points is the point in the set that minimizes the sum of distances to other points.
It can be determined exactly in $O(n^2)$ time by computing the distances between all pairs of points.
Previous works show that one can significantly reduce the number of distance computations needed by adaptively querying distances \cite{Bagaria2017}. The resulting randomized algorithm is obtained by a direct conversion of the computation problem to a multi-armed bandit statistical inference problem. In this work, we show that we can better exploit the structure of the underlying computation problem by modifying the traditional bandit sampling strategy and using it in conjunction with a suitably chosen multi-armed bandit algorithm. Four to five orders of magnitude gains over exact computation are obtained on real data, in terms of both number of distance computations needed and wall clock time. Theoretical results are obtained to quantify such gains in terms of data parameters.
Our code is publicly available online at \url{https://github.com/TavorB/Correlated-Sequential-Halving}.

\end{abstract}

\section{Introduction}

In large datasets, one often wants to find a single element that is representative of the dataset as a whole. While the mean, a point potentially outside the dataset, may suffice in some problems, it will be uninformative when the data is sparse in some domain; taking the mean of an image dataset will yield visually random noise \cite{leskovec2014mining_cosDist}. In such instances the medoid is a more appropriate representative, where the medoid is defined as the point in a dataset which minimizes the sum of distances to other points. For one dimensional data under $\ell_1$ distance, this is equivalent to the median. 
This has seen use in algorithms such as $k$-medoid clustering due to its reduced sensitivity to outliers \cite{rdusseeun1987clustering_medoid}.

Formally, let  $x_1,...,x_n \in \CU$,
where the underlying space $\CU$ is equipped with some distance function $d: \CU \times \CU \mapsto \R_+$. It is convenient to think of $\CU = \R^d$ and $d(x,y) = \|x-y\|_2$ for concreteness, but other spaces and distance functions (which need not  be symmetric or satisfy the triangle inequality) can be substituted. The medoid of $\{x_i\}_{i=1}^n$, assumed here to be unique, is defined as $x_{i^*}$ where
\begin{equation} \label{eq:Medoid}
     i^* = \argmin_{i \in [n]} \theta_i \hspace{0.5cm} : \hspace{0.5cm} \theta_i \triangleq \frac{1}{n}\sum_{j=1}^n d(x_i,x_j) 
\end{equation}

Note that for non-adversarially constructed data, the medoid will almost certainly be unique. Unfortunately, brute force computation of the medoid becomes infeasible for large datasets, e.g. RNA-Seq datasets with $n=100k$ points \cite{rnaseqData}.

This issue has been addressed in recent works by noting that in most problem instances solving for the value of each $\theta_{i}$ exactly is unnecessary, as we are only interested in identifying $x_{i^*}$ and not in computing every $\theta_i$ \cite{Bagaria2017,newling2016sub,toprank_okamoto2008ranking,RAND_wang2006fast}.
This allows us to solve the problem by only estimating each $\theta_i$, such that we are able to distinguish with high probability whether it is the medoid.
By turning this computational problem into a statistical one of estimating the $\theta_i$'s one can greatly decrease algorithmic complexity and running time.
The key insight here is that sampling a random $J \sim \textnormal{Unif}([n])$ and computing $d(x_i,x_J)$ gives an unbiased estimate of $\theta_i$.
Clearly, as we sample and average over more independently selected $J_k\overset{iid}{\sim} \textnormal{Unif}([n])$, we will obtain a better estimate of $\theta_i$.
Estimating each $\theta_i$ to the same degree of precision by computing $\hat \theta_i = \frac{1}{T} \sum_{k=1}^T d(x_i,x_{J_k})$ yields an order of magnitude improvement over exact computation, via an algorithm like RAND \cite{RAND_wang2006fast}.

In a recent work \cite{Bagaria2017} it was observed that this statistical estimation could be done much more efficiently by adaptively allocating estimation budget to each of the $\theta_i$ in eq. \eqref{eq:Medoid}.
This is due to the observation that we only need to estimate each $\theta_i$ to a necessary degree of accuracy, such that we are able to say with high probability whether it is the medoid or not. By reducing to a stochastic multi-armed bandit problem, where each arm corresponds to a $\theta_i$, existing multi-armed bandit algorithms can be leveraged leading to the algorithm Med-dit \cite{Bagaria2017}. As can be seen in Fig. \ref{fig:fig1} adding adaptivity to the statistical estimation problem yields another order of magnitude improvement.
\begin{figure}[ht!]
\vspace{-.1cm}
\centering
    \begin{subfigure}[t]{.45\textwidth}
        \includegraphics[width = \textwidth]{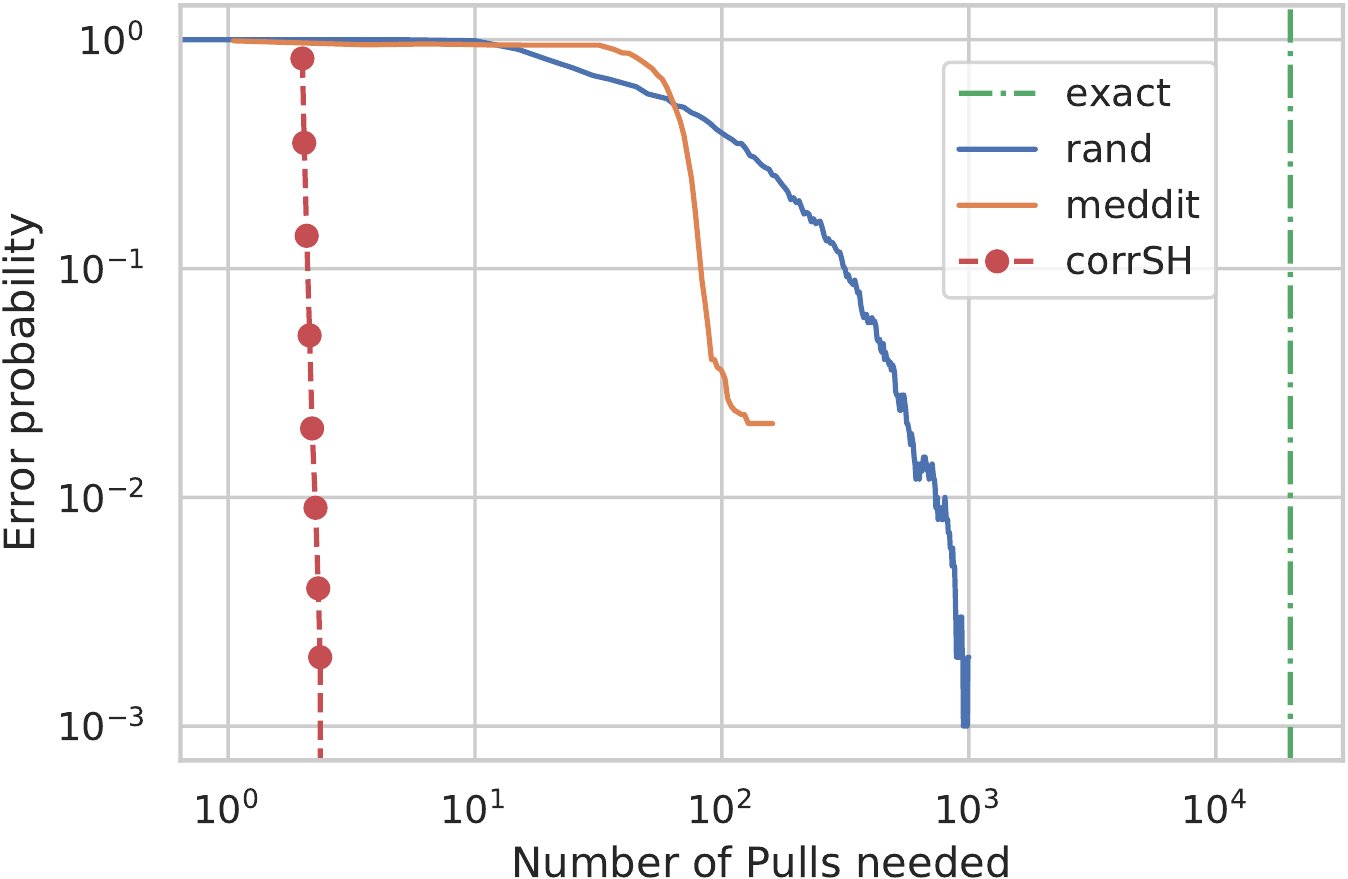}
        \caption{RNA-Seq 20k dataset \cite{rnaseqData}, $\ell_1$ dist} \label{fig:rnaseq20kFig}
    \end{subfigure}~
    \begin{subfigure}[t]{.45\textwidth}

        \includegraphics[width = \textwidth]{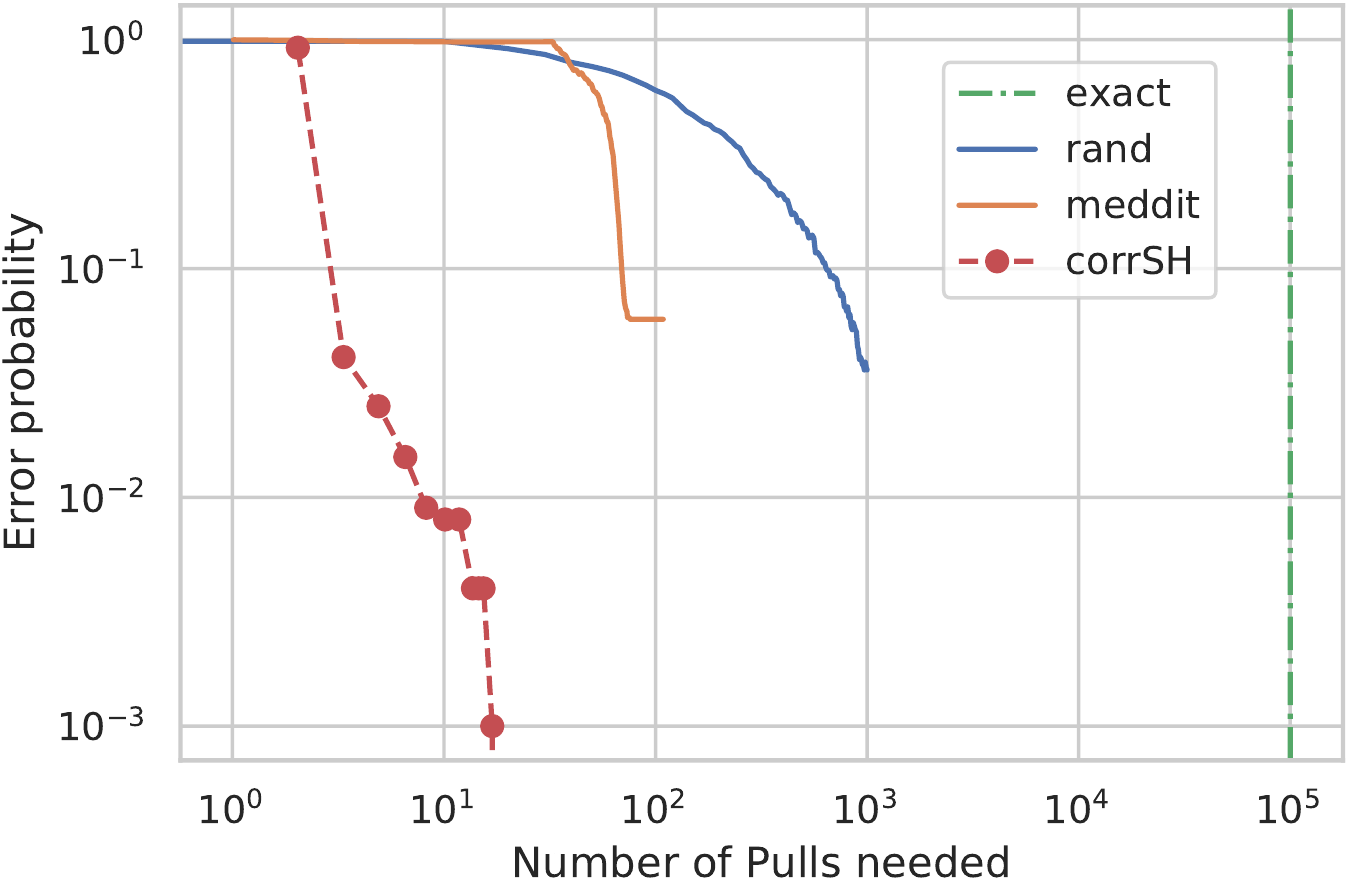}
        \caption{100k users from Netflix dataset \cite{bennett2007netflix}, cosine dist
        } \label{fig:netflix100kFig}
    \end{subfigure}
    \caption{Empirical performance of exact computation, RAND, Med-dit and Correlated Sequential Halving.
    The error probability is the probability of not returning the correct medoid.} \label{fig:fig1}
\end{figure}
\vspace{-.8cm}
\subsection{Contribution}

While adaptivity is already a drastic improvement, current schemes are still unable to process large datasets efficiently; running Med-dit on datasets with $n=100k$ takes 1.5 hours.
The main contribution of this paper is a novel algorithm that is able to perform this same computation in 1 minute.
Our algorithm achieves this by observing that we want to find the minimum element and not the minimum value, and so our interest is only in the {\em relative} ordering of the $\theta_i$, not their actual values.
In the simple case of trying to determine if $\theta_1 > \theta_2$, we are interested in estimating $\theta_1 -\theta_2$ rather than $\theta_1$ or $\theta_2$ separately.  One can imagine the first step is to take one sample for each, i.e. $d(x_1,x_{J_1})$ to estimate $\theta_1$ and $d(x_2,x_{J_2})$ to estimate $\theta_2$, and compare the two estimates. 
In the direct bandit reduction used in the design of Med-dit, $J_1$ and $J_2$ would be {\em independently} chosen, since successive samples in the multi-armed bandit formulation are independent. In effect, we are trying to compare $\theta_1$ and $\theta_2$, but not using a common reference point to estimate them.
This can be problematic for a sampling based algorithm, as it could be the case that $\theta_1 < \theta_2$, but the reference point $x_{J_1}$ we pick for estimating $\theta_1$ is on the periphery of the dataset as in Fig. \ref{tikz:naiveBandit}.
This issue can fortunately be remedied by using the same reference point for both $x_1$ and $x_2$ as in Fig. \ref{tikz:betterBandit}.
By using the same reference point we are {\em correlating} the samples and intuitively reducing the variance of the estimator for $\theta_1 - \theta_2$.
Here, we are exploiting the structure of the underlying computation problem rather than simply treating this as a standard multi-armed bandit statistical inference problem.

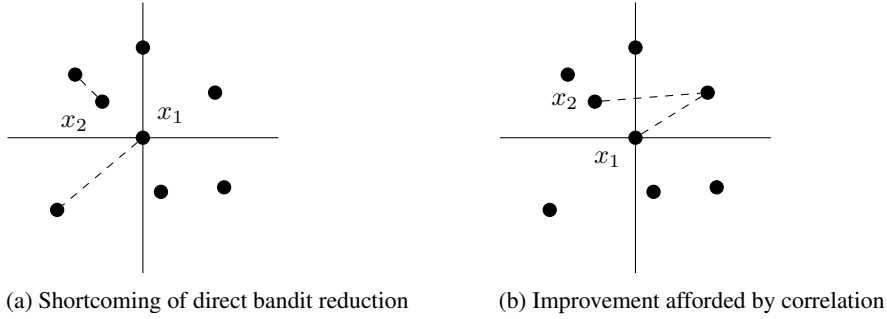
\begin{figure}[htbp]
\centering
    \hspace{.075\textwidth}
    \begin{subfigure}[b]{.45\textwidth}
    \begin{tikzpicture}[dot/.style={draw, minimum size = 5pt, fill,circle,inner sep=0}, scale=.6]
        \draw (-3,0)--(3,0);
        \draw (0,-3)--(0,3);
        \node[dot, label = {above right: $x_1$}] (x1) at (0,0) {};
        \node[dot, label = {below left: $x_2$}] (x2) at (-.9,.8) {};
        \node[dot] (x4) at (1.6,1) {};
        \node[dot] (x3) at (0,2) {};
        \node[dot] (x5) at (-1.5,1.4) {};
        \node[dot] (x6) at (-1.9,-1.6) {};
        \node[dot] (x7) at (.4,-1.2) {};
        \node[dot] (x8) at (1.8,-1.1) {};
        \draw[dashed] (x2)--(x5);
        \draw[dashed] (x1) -- (x6);
    \end{tikzpicture}
    \captionsetup{justification=raggedright,singlelinecheck=false}

    \caption{Shortcoming of direct bandit reduction} \label{tikz:naiveBandit}
    \label{fig:correlation}
    \end{subfigure}
    ~
    \begin{subfigure}[b]{.45\textwidth}
    \begin{tikzpicture}[dot/.style={draw, minimum size = 5pt, fill,circle,inner sep=0}, scale=.6]
        \draw (-3,0)--(3,0);
        \draw (0,-3)--(0,3);
        \node[dot, label = {below left: $x_1$}] (x1) at (0,0) {};
        \node[dot, label = {left: $x_2$}] (x2) at (-.9,.8) {};
        \node[dot] (x4) at (1.6,1) {};
        \node[dot] (x3) at (0,2) {};
        \node[dot] (x5) at (-1.5,1.4) {};
        \node[dot] (x6) at (-1.9,-1.6) {};
        \node[dot] (x7) at (.4,-1.2) {};
        \node[dot] (x8) at (1.8,-1.1) {};
        
        \draw[dashed] (x2)--(x4);
        \draw[dashed] (x1) -- (x4);
    \end{tikzpicture}
    \captionsetup{justification=raggedright,singlelinecheck=false}

    \caption{Improvement afforded by correlation}\label{tikz:betterBandit}
    \end{subfigure}
    \caption{Toy 2D example} \label{tikz:toy2D}
\end{figure}

Building on this idea, we correlate the random sampling in our reduction to statistical estimation and design a new medoid algorithm, Correlated Sequential Halving. This algorithm is based on the Sequential Halving algorithm in the multi-armed bandit literature \cite{kaufmann2016complexity}. We see in Fig. \ref{fig:fig1} that we are able to gain another one to two orders of magnitude improvement, yielding an overall {\em four to five} orders of magnitude improvement over exact computation. This is accomplished by exploiting the fact that the underlying problem is computational rather than statistical.

\subsection{Theoretical Basis}

We now provide high level insight into the theoretical basis for our observed improvement, later formalized in Theorem \ref{thm}.
We assume without loss of generality that the points are sorted so that $\theta_1< \theta_2 \le \hdots \le \theta_n$, 
and define $\Delta_i \triangleq \theta_i - 
\theta_1$ for $i \in [n] \setminus \{1\}$, where $[n]$ is the set $\{1, 2, \ldots, n\}$. For visual clarity, we use the standard notation $a \vee b \triangleq \max(a,b)$ and $a \wedge b \triangleq \min(a,b)$, and assume a base of 2 for all logarithms.

Our proposed algorithm samples in a correlated manner as in Fig. \ref{tikz:betterBandit}, and so we introduce new notation to quantify this improvement. %
As formalized later, $\rho_i$ is the improvement afforded by correlated sampling in distinguishing arm $i$ from arm $1$.
$\rho_i$ can be thought of as the relative reduction in variance, where a small $\rho_i$ indicates that $d(x_1,x_{J_1}) - d(x_i,x_{J_1})$ concentrates\footnote{Throughout this work we talk about concentration in the sense of the empirical average  of a random variable concentrating about the true mean of that random variable.} faster than $d(x_1,x_{J_1})- d(x_i,x_{J_2})$ about $-\Delta_i$ for $J_1,J_2$ drawn independently from $\textnormal{Unif}([n])$, shown graphically in Fig. \ref{fig:corrVsRand}.

\begin{figure}[h!]    
    \centering
    \begin{subfigure}[b]{.45\textwidth}
        \includegraphics[width =\textwidth]{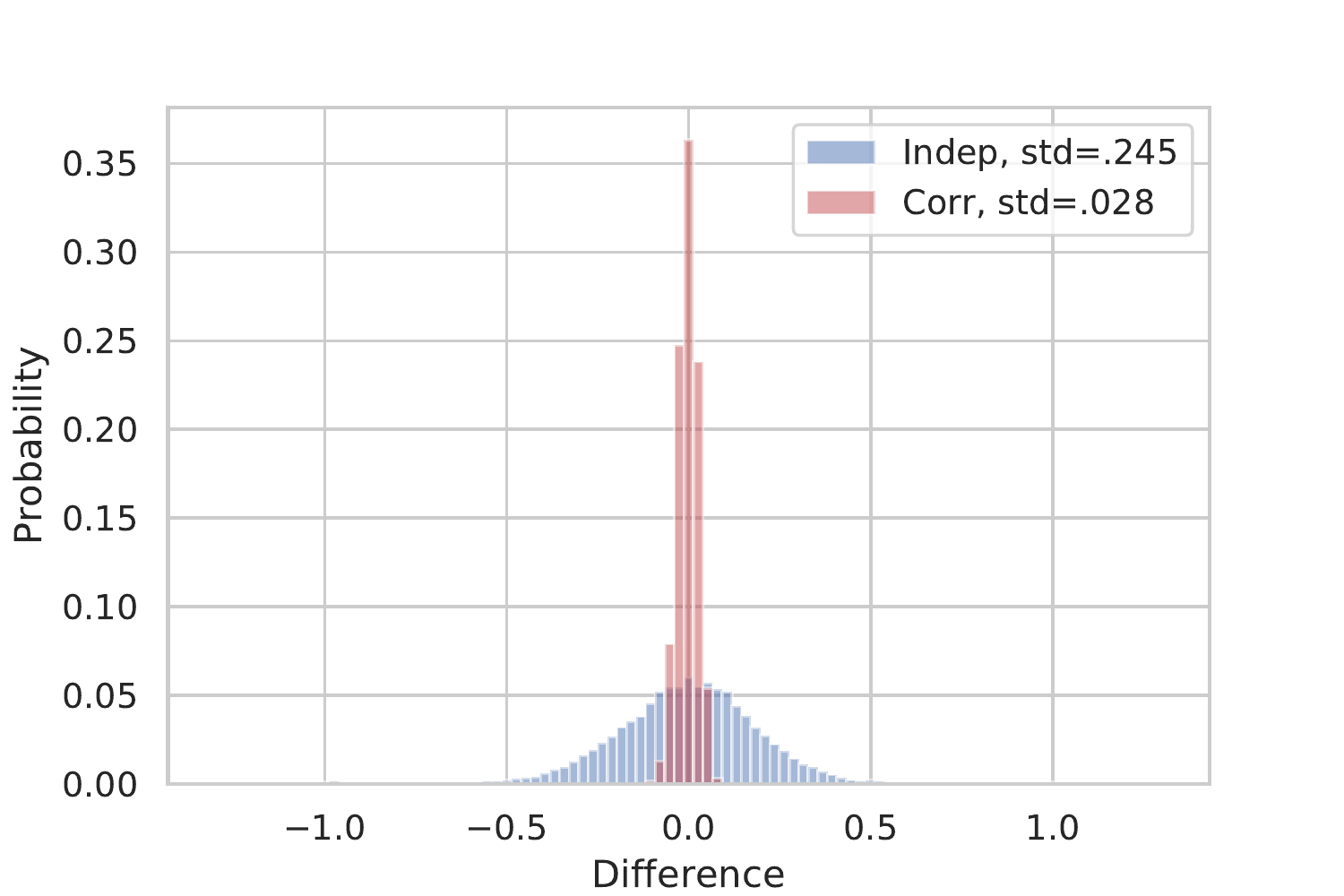}
        \caption{Comparison of top 2 points (i=2)}
    \end{subfigure}
    ~
    \begin{subfigure}[b]{.45\textwidth}
        \includegraphics[width =\textwidth]{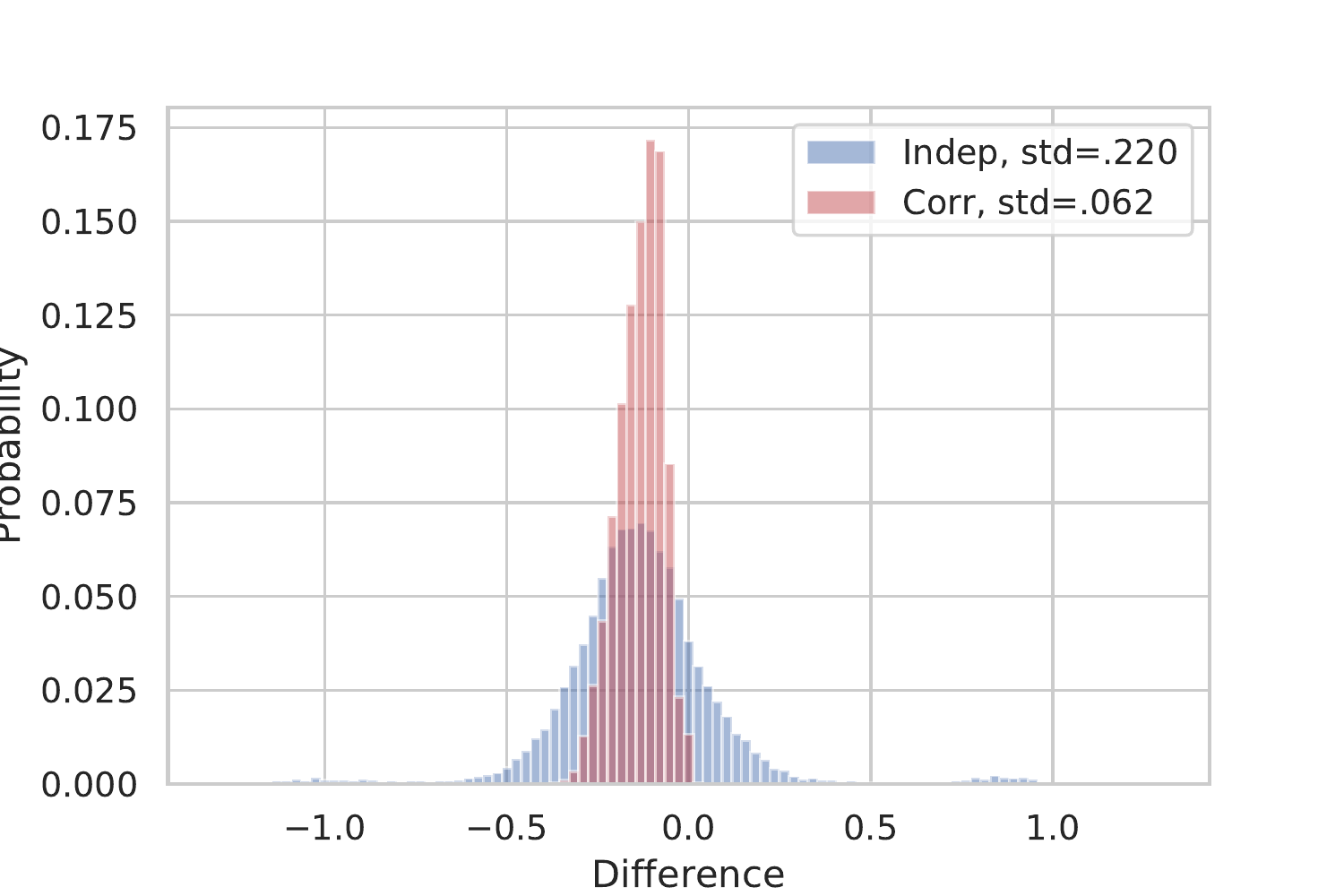}
        \caption{Comparison of top and mid (i=10000)}
    \end{subfigure}
    \caption{Correlated $d(1,J_1)-d(i,J_1)$ vs Independent $d(1,J_1)-d(i,J_2)$ sampling in RNA-Seq 20k dataset \cite{rnaseqData}. Averaged over the dataset, the independent samples have standard deviation $\sigma = 0.25$, so for (a) $\rho_i=.11$, and (b) $\rho_i = .25$} \label{fig:corrVsRand}
\end{figure}

In the standard bandit setting with independent sampling, one needs a number of samples proportional to $H_2 = \max_{i \ge 2} \nicefrac{i}{\Delta_i^2}$ to determine the best arm \cite{Karnin2013}.
Replacing the standard arm difficulty of $\nicefrac{1}{\Delta_i^2}$ with $\nicefrac{\rho_i^2}{\Delta_i^2}$, the difficulty accounting for correlation, 
we show that one can solve the problem using a number of samples proportional to $\tilde H_2 = \max_{i \ge 2} \nicefrac{i\rho_{(i)}^2}{\Delta_{(i)}^2}$, an analogous measure.
Here the permutation $(\cdot)$ indicates that the arms are sorted by decreasing $\nicefrac{\rho_i}{\Delta_i}$ as opposed to just by $\nicefrac{1}{\Delta_i}$. These details are formalized in Theorem \ref{thm}.

Our theoretical improvement incorporating correlation can thus be quantified as $H_2 / \tilde H_2$.
As we show later in Fig. \ref{fig:rhoDelta}, in real datasets arms with small $\Delta_i$ have similarly small $\rho_i$, indicating that correlation yields a larger relative gain for previously difficult arms.
Indeed, for the RNA-Seq 20k dataset we see that the ratio is $H_2 / \tilde H_2=6.6$.
The Netflix 100k dataset is too large to perform this calculation on, but for similar datasets like MNIST \cite{mnistData} this ratio is $4.8$.
We hasten to note that this ratio does not fully encapsulate the gains afforded by the correlation our algorithm uses, as only pairwise correlation is considered in our analysis. This is discussed further in Appendix \ref{sec:beyondPairwise}

\subsection{Related Works}

Several algorithms have been proposed for the problem of medoid identification. An $O(n^{3/2}2^{\Theta (d)})$ algorithm called TRIMED was developed finding the true medoid of a dataset under certain assumptions on the distribution of the points near the medoid \cite{newling2016sub}.
This algorithm cleverly carves away non-medoid points, but unfortunately does not scale well with the dimensionality of the dataset.
In the use cases we consider the data is very high dimensional, often with $d \approx n$.
While this algorithm works well for small $d$, it becomes infeasible to run when $d>20$.
A similar problem, where the central vertex in a graph is desired, has also been analyzed. 
One proposed algorithm for this problem is RAND, which selects a random subset of vertices of size $k$ and measures the distance between each vertex in the graph and every vertex in the subset \cite{RAND_wang2006fast}. This was later improved upon with the advent of TOPRANK \cite{toprank_okamoto2008ranking}. We build off of the algorithm Med-dit (\textit{Med}oid-Ban\textit{dit}), which finds the medoid in $\tilde{O}(n)$ time under mild distributional assumptions \cite{Bagaria2017}.

More generally, the use of bandits in computational problems has gained recent interest. In addition to medoid finding  \cite{Bagaria2017}, other examples include Monte Carlo Tree Search for game playing AI \cite{MCTSkocsis2006bandit}, hyper-parameter tuning \cite{li2016hyperband}, $k$-nearest neighbor, hierarchical clustering and mutual information feature selection \cite{Bagaria2018}, approximate $k$-nearest neighbor \cite{reinhard2019adaptive}, and Monte-Carlo multiple testing \cite{Martinzhang2019adaptive}. 
All of these works use a {\em direct} reduction of the computation problem to the multi-armed bandit statistical inference problem. In contrast, the present work further exploits the fact that the inference problem comes from a computational problem, which allows a more effective sampling strategy to be devised. 
This idea of preserving the structure of the computation problem in the reduction to a statistical estimation one has potentially broader impact and applicability to these other applications.

\section{Correlated Sequential Halving}

In previous works it was noted that sampling a random $J \sim \textnormal{Unif}([n])$ and computing $d(x_i,x_J)$ gives an unbiased estimate of $\theta_i$ \cite{Bagaria2017, Bagaria2018}.
This was where the problem was reduced to that of a multi-armed bandit and solved with an Upper Confidence Bound (UCB) based algorithm \cite{lai1985asymptotically}.
In their analysis, estimates of $\theta_i$ are generated as $\hat{\theta}_i = \frac{1}{|\CJ_i|}\sum_{j \in \CJ_i} d(x_i,x_j)$ for $\CJ_i \subseteq [n]$, 
and the analysis hinges on showing that as we sample the arms more, $\hat{\theta}_1 < \hat{\theta}_i \ \forall \ i \in [n]$ with high probability
\footnote{In order to maintain the unbiasedness of the estimator given the sequential nature of UCB, reference points are chosen with replacement in Med-dit, potentially yielding a multiset $\CJ_i$. For the sake of clarity we ignore this subtlety for Med-dit, as our algorithm samples without replacement.}.
In a standard UCB analysis this is done by showing that each $\hat \theta_i$ individually concentrates.
However on closer inspection, we see that this is not necessary; it is sufficient for the differences $\hat{\theta}_1 - \hat{\theta}_i$ to concentrate for all $i \in [n]$.

Using our intuition from Fig.~\ref{tikz:toy2D} we see that one way to get this difference to concentrate faster is by sampling the same $j$ for both arms $1$ and $i$. We can see that if $|\CJ_1| = |\CJ_i|$, one possible approach is to set $\CJ_1 = \CJ_i = \CJ$. This allows us to simplify $\hat{\theta}_1 - \hat{\theta}_i$ as
\begin{align*}
\hat{\theta}_1 - \hat{\theta}_i &= \frac{1}{|\CJ_1|}\sum_{j \in \CJ_1} d(x_1,x_j) - \frac{1}{|\CJ_i|}\sum_{j \in \CJ_i} d(x_i,x_j)
= \frac{1}{|\CJ|}\sum_{j \in \CJ} d(x_1,x_j) - d(x_i,x_j).
\end{align*}
While UCB algorithms yield a serial process that samples one arm at a time, this observation
suggests that a different algorithm that pulls many arms at the same time would perform better, as then the same reference $j$ could be used.
By estimating each points' centrality $\theta_i$ independently, we are ignoring the dependence of our estimators on the random reference points selected; 
using the same set of reference points for estimating each $\theta_i$ reduces the variance in the choice of random reference points.
We show that a modified version of \textit{Sequential Halving} \cite{Karnin2013} is much more amenable to this type of analysis.
At a high level this is due to the fact that Sequential Halving proceeds in stages by sampling arms uniformly, eliminating the worse half of arms from consideration, and repeating. 
This very naturally obeys this ``correlated sampling'' condition, as we can now use the same set of reference points $\CJ$ for all arms under consideration in each round.
We present the slightly modified algorithm below, introducing correlation and capping the number of pulls per round, noting that the main difference comes in the analysis rather than the algorithm itself.

\begin{algorithm}[H]
\begin{algorithmic}[1]
\caption{Correlated Sequential Halving}\label{alg}
\STATE \textbf{Input:} Sampling budget $T$, dataset $\{x_i\}_{i=1}^n$
\STATE $\text{initialize } S_0 \gets [n]$
\FOR{r=0 \TO $\lceil \log n \rceil -1$}
\STATE{select a set $\CJ_r$ of $t_r$ data point indices uniformly \\ at random without replacement from $[n]$ where
$$t_r =\left\{ 1 \vee \left\lfloor \frac{T}{|S_r| \lceil \log n \rceil} \right\rfloor \right\} \wedge n \hspace{5cm}$$ 
}
\STATE{For each $i \in S_r$ set $\hat{\theta}^{(r)}_i = \frac{1}{t_r} \sum_{j \in \CJ_r} d(x_i,x_j)$}
\IF{$t_r=n$}
\STATE{Output arm in $S_r$ with the smallest $\hat{\theta}^{(r)}_i$}
\ELSE
\STATE{Let $S_{r+1}$ be the set of $\lceil |S_r|/2 \rceil$ arms in $S_r$ with the smallest $\hat \theta^{(r)}_i$}
\ENDIF
\ENDFOR
\RETURN{arm in $S_{\lceil \log n \rceil}$}
\end{algorithmic}
\end{algorithm}

Examining the random variables $\hat{\Delta}_i \triangleq d(x_1,x_J) - d(x_i,x_J)$ for $J \sim \text{Unif}([n])$, we see that for any fixed dataset all $\hat{\Delta}_i$ are bounded, as $\max_{i,j \in [n]} d(x_i,x_j)$ is finite. 
In particular, this means that all $\hat{\Delta}_i$ are sub-Gaussian.

\begin{defn} \label{assume:subG}
We define $\sigma$ to be the minimum sub-Gaussian constant of $d(x_I,x_J)$ for $I,J$ drawn independently from $\text{Unif}([n])$.
Additionally, for $i \in [n]$ we define $\rho_i \sigma$ to be the minimum sub-Gaussian constant of $d(x_1,x_J) - d(x_i,x_J)$, where $\sigma$ is as above and $\rho_i$ is an arm (point) dependent scaling, as displayed in Figure \ref{fig:corrVsRand}.
\end{defn}

This shifts the direction of the analysis, as where in previous works the sub-Gaussianity of $d(x_1,x_J)$ was used \cite{Bagaria2017}, we now instead utilize the sub-Gaussianity of $d(x_1,x_J) - d(x_i,x_J)$. 
Here $\rho_i \le 1$ indicates that the correlated sampling improves the concentration and by extension the algorithmic performance. 

A standard UCB algorithm is unable to algorithmically make use of these $\{\rho_i\}$.
Even considering batch UCB algorithms, in order to incorporate  correlation the confidence bounds would need to be calculated differently for each pair of arms depending on the number of $j$'s they've pulled in common and the sub-Gaussian parameter of $d(x_{i_1},x_J) - d(x_{i_2},x_J)$.
It is unreasonable to assume this is known for all pairs of points a priori, and so we restrict ourselves to an algorithm that only uses these pairwise correlations implicitly in its analysis instead of explicitly in the algorithm.
Below we state the main theorem of the paper.

\begin{theorem} \label{thm}
Assuming that $T \ge n \log n$ and denoting the sub-Gaussian constants of $d(x_1,x_J) - d(x_i,x_J)$ as $\rho_i \sigma$ for $i \in [n]$ as in definition \ref{assume:subG}, 
Correlated Sequential Halving (Algorithm \ref{alg}) correctly identifies the medoid in at most $T$ distance computations with probability at least
$$1 - 3 \log n \exp{\left(-\frac{T  }{16 \sigma^2 \log n} \cdot \min_{i \ge \frac{T}{n \log n}}\left[ \frac{\Delta_{(i)}^2}{i \rho_{(i)}^2}\right]\right)} \\$$

$$\text{which can be coarsely lower bounded as} \hspace{1cm} 1 - 3 \log n \cdot \exp{\left(-\frac{T  }{16 \tilde{H}_2 \sigma^2 \log n}\right)}$$

$$\text{where} \hspace{1cm}\tilde H_2 = \underset{i\ge 2}{\max} \frac{i\rho_{(i)}^2}{\Delta_{(i)}^2}
\hspace{1cm}, \hspace{1cm}
(\cdot) : [n] \mapsto [n], (1) = 1, \frac{\Delta_{(2)}}{\rho_{(2)}} \le \frac{\Delta_{(3)}}{\rho_{(3)}} \le  \cdots \le \frac{\Delta_{(n)}}{\rho_{(n)}}$$

\end{theorem}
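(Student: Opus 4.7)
My plan is to adapt the Sequential Halving analysis of Karnin et al.\ so that the usual per-arm sub-Gaussian constant $\sigma$ is replaced by the pairwise constant $\rho_i\sigma$ made available by the shared reference set $\CJ_r$. The central statistic in round $r$ is
$$D_i^{(r)} \triangleq \hat\theta_1^{(r)} - \hat\theta_i^{(r)} = \frac{1}{t_r}\sum_{j \in \CJ_r}\bigl[d(x_1,x_j)-d(x_i,x_j)\bigr],$$
an average of $t_r$ terms drawn uniformly without replacement from $[n]$. Each term has mean $-\Delta_i$ and is $(\rho_i\sigma)$-sub-Gaussian by Definition~\ref{assume:subG}, and sampling without replacement from a finite population preserves the corresponding sub-Gaussian tail bound (by negative association / Hoeffding--Serfling), so
$$\P\bigl(D_i^{(r)}>0\bigr) \;\le\; \exp\!\left(-\frac{t_r \Delta_i^2}{2\rho_i^2\sigma^2}\right).$$

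Next I upper-bound the probability that arm $1$ is eliminated in round $r$ (assuming $1 \in S_r$). Let $n_r=|S_r|$. Elimination requires at least $\lceil n_r/2\rceil$ arms $i \in S_r$ to satisfy $D_i^{(r)}>0$. Sorting all $n$ arms globally by $\Delta_{(\cdot)}/\rho_{(\cdot)}$ ascending, at most $\lceil n_r/3\rceil$ arms in $S_r$ can have sorted-rank in $[1,\lceil n_r/3\rceil]$ (a trivial pigeonhole), so if arm $1$ is eliminated, at least $n_r/6$ of the ``easy'' arms in $S_r$ (sorted-rank $>\lceil n_r/3\rceil$) must satisfy $D_i^{(r)}>0$. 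Setting $i_r \triangleq \lceil n_r/3\rceil+1$, every such arm obeys $\Delta_i^2/\rho_i^2 \ge \Delta_{(i_r)}^2/\rho_{(i_r)}^2$, so Markov's inequality on the count gives
$$\P(\text{arm 1 eliminated in round }r) \;\le\; \frac{6}{n_r}\cdot n_r \exp\!\left(-\frac{t_r \Delta_{(i_r)}^2}{2\rho_{(i_r)}^2 \sigma^2}\right) \;=\; 6\exp\!\left(-\frac{t_r \Delta_{(i_r)}^2}{2\rho_{(i_r)}^2 \sigma^2}\right).$$

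A union bound over the $R=\lceil\log n\rceil$ rounds supplies the $3\log n$ prefactor after absorbing the $6$. The assumption $T\ge n\log n$ guarantees $t_r\ge 1$ throughout; in each round either (i) $t_r=n$, in which case exact distances are computed on $S_r$ and arm $1$ survives deterministically (contributing no failure), or (ii) $t_r\ge T/(2n_r R)$. Substituting $n_r \le n/2^r$ and the estimate $n_r \le 3 i_r$ into the exponent of case (ii) produces a term of the form $T\Delta_{(i_r)}^2/(c\sigma^2 R\, i_r \rho_{(i_r)}^2)$, and minimizing over rounds yields the refined first bound; the range $i\ge T/(n\log n)$ in the minimum simply excludes round indices that fall into the deterministic $t_r=n$ branch (i.e., where $n_r \le T/(nR)$). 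The coarser bound follows by replacing the min by $1/\tilde H_2$.

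The main obstacle is the pigeonhole-plus-Markov step together with sharp bookkeeping of constants: the floor in the definition of $t_r$, the factor $6$ from Markov, the factor $2$ in the sub-Gaussian exponent, and the $1/3$ pigeonhole split must combine to yield a constant compatible with the $16$ displayed in the theorem. One must also carefully match the round-dependent index $i_r \approx n_r/3$ to the universal index $i$ in the final minimum, so that the geometric decrease of $n_r$ across rounds translates into the $1/i$ factor appearing in the difficulty measure.
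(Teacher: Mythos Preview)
Your approach is essentially the paper's: the same sub-Gaussian tail bound on $D_i^{(r)}$, the same ``remove the hardest fraction of arms, then apply Markov to the count of easy arms beating arm $1$'' argument, and the same union bound over rounds with the $t_r=n$ branch handled deterministically. The only substantive discrepancy is the split fraction. The paper excludes the $i_r=|S_r|/4$ arms with smallest $\Delta_i/\rho_i$ (phrased as a local removal within $S_r$, but by the order-statistic monotonicity this is equivalent to your global-rank pigeonhole). With that choice, elimination of arm $1$ forces at least $|S_r|/4$ of the remaining $3|S_r|/4$ easy arms to beat it, so Markov yields the prefactor $3$; combined with $t_r\ge T/(2|S_r|\log n)$ and $|S_r|=4i_r$, the exponent picks up exactly the $16$. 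Your $1/3$ split produces $6$ and $12$ instead, and the sentence ``absorbing the $6$'' into a $3\log n$ prefactor is not valid as written. Replace $\lceil n_r/3\rceil$ by $n_r/4$ and the constants fall out exactly as stated; everything else in your outline matches the paper.
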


Above $\tilde H_2$
is a natural measure of hardness for this problem analogous to $H_2 = \max_i \frac{i}{\Delta_{i}^2}$ in the standard bandit case, and $(\cdot)$ orders the arms by difficulty in distinguishing from the best arm after taking into account the $\rho_i$.
We defer the proof of Thm. \ref{thm} and necessary lemmas to Appendix \ref{app:proof} for readability.

\subsection{Lower bounds}

Ideally in such a bandit problem, we would like to provide a matching lower bound.
We can naively lower bound the sample complexity as $\Omega(n)$, but unfortunately no tighter results are known.
A more traditional bandit lower bound was recently proved for adaptive sampling in the approximate $k$-NN case, but requires that the algorithm only interact with the data by sampling coordinates uniformly at random \cite{reinhard2019adaptive}.
This lower bound can be transferred to the medoid setting, however this constraint becomes that an algorithm can only interact with the data by measuring the distance from a desired point to another point selected uniformly at random.
This unfortunately removes all the correlation effects we analyze.
For a more in depth discussion of the difficulty of providing a lower bound for this problem and the higher order problem structure causing this, we refer the reader to Appendix \ref{sec:beyondPairwise}.

\section{Simulation Results}

Correlated Sequential Halving (corrSH) empirically performs much better than UCB type algorithms on all datasets tested, reducing the number of comparisons needed by 2 orders of magnitude for the RNA-Seq dataset and by 1 order of magnitude for the Netflix dataset to achieve comparable error probabilities, as shown in Table~\ref{table:runtimes}.
This yields a similarly drastic reduction in wall clock time which contrasts most UCB based algorithms; usually, when implemented, the overhead needed to run UCB makes it so that even though there is a significant reduction in number of pulls, the wall clock time improvement is marginal \cite{Bagaria2018}.

\begin{table}[h]
    \centering
    \resizebox{0.7\columnwidth}{!}{
        \begin{tabular}{c|c|c|c|c|c|c|}
                dataset, metric &$n,\ d$    &         & \textbf{corrSH} & Med-dit & Rand & Exact Comp.\\ \hline
        \multirow{2}{*}{RNA-Seq 20k,  $\ell_1$}&\multirow{2}{*}{20k, 28k}   &time    & \textbf{10.9}    & 246   & 2131 & 40574    \\
            & &\# pulls             & \textbf{2.43}    & 121 (2.1\%)   & 1000 (.1\%) & 20000 \\ \hline
        \multirow{2}{*}{RNA-Seq 100k,  $\ell_1$} & \multirow{2}{*}{109k, 28k}&time  & \textbf{64.2}      & 5819 & 10462  & -  \\
         && \# pulls        & \textbf{2.10 }    & 420    & 1000 (.5\%)     & 100000\\ \hline
        \multirow{2}{*}{Netflix 20k, cosine dist} & \multirow{2}{*}{20k, 18k}&time   &\textbf{ 6.82}      & 593    & 70.2     & 139\\
         &&\# pulls  & \textbf{15.0 }    & 85.8   & 1000 (.6\%)     & 20000\\ \hline
        \multirow{2}{*}{Netflix 100k, cosine dist} &\multirow{2}{*}{100k, 18k}&time& \textbf{53.4}    & 6495  & 959    & -\\
         &&\# pulls  & \textbf{18.5}    & 90.5 (6\%) & 1000 (3.6\%)     & 100000\\\hline
        \multirow{2}{*}{MNIST Zeros, $\ell_2$} &\multirow{2}{*}{6424, 784}&time & \textbf{1.46}    & 151     & 65.7    & 22.8\\
         &&\# pulls         & \textbf{47.9}    & 91.2 (.1\%)& 1000 (65.2\%)     & 6424\\\hline  
        \end{tabular}
        }
        \vspace{.1cm}
        \captionsetup{justification=centering}
        \caption{Performance in average number of pulls per arm. Final percent error noted parenthetically if nonzero. corrSH was run with varying budgets until it had no failures on the 1000 trials.}\label{table:runtimes}
\end{table}

We note that in our simulations we only used 1 pull to initialize each arm for Med-dit for plotting purposes where in reality one would use 16 or some larger constant, sacrificing a small additional number of pulls for a roughly $10\%$ reduction in wall clock time.
In these plots we show a comparison between Med-dit \cite{Bagaria2017}, Correlated Sequential Halving, and RAND \cite{RAND_wang2006fast}, shown in Figures \ref{fig:fig1} and \ref{fig:simPlots}. 
\begin{figure}[ht!]
\centering
    \includegraphics[width = \textwidth]{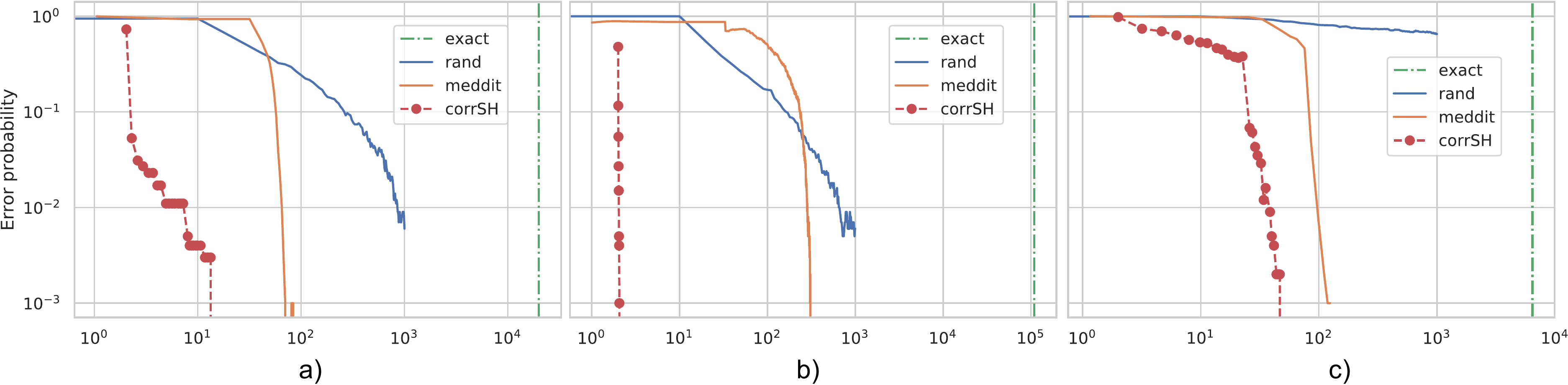}
    \caption{Number of pulls versus error probability for various datasets and distance metrics. (a) Netflix 20k, cosine \cite{bennett2007netflix}. (b) RNA-Seq 100k, $\ell_1$ \cite{rnaseqData} (c) MNIST, $\ell_2$ \cite{mnistData}}  \label{fig:simPlots}
\end{figure}
\subsection{Simulation details}

The 3 curves for the randomized algorithms previously discussed are generated in different ways.
For RAND and Med-dit the curves represent the empirical probability, averaged over 1000 trials, that after $nx$ pulls ($x$ pulls per arm on average) the true medoid was the empirically best arm.
RAND was run with a budget of 1000 pulls per arm, and Med-dit was run with target error probability of $\delta = 1/n$.
Since Correlated Sequential Halving behaves differently after $x$ pulls per arm depending on what its input budget was, it requires a different method of simulation;
every solid dot in the plots represents the average of 1000 trials at a fixed budget, and the dotted line connecting them is simply interpolating the projected performance.
In all cases the only variable across trials was the random seed, which was varied across 0-999 for reproducibility.
The value noted for Correlated Sequential Halving in Table \ref{table:runtimes} is the minimum budget above which all simulated error probabilities were 0.

\begin{note}
In theory it is much cleaner to discard samples from previous stages when constructing the estimators in stage $r$ to avoid dependence issues in the analysis. In practice we use these past samples, that is we construct our estimator for arm $i$ in stage $r$ from all the samples seen of arm $i$ so far, rather than just the $t_r$ fresh ones.
\end{note}

Many different datasets and distance metrics were used to validate the performance of our algorithm.
The first dataset used was a single cell RNA-Seq one, which contains the gene expressions corresponding to each cell in a tissue sample.
A common first step in analyzing single cell RNA-Seq datasets is clustering the data to discover sub classes of cells, where medoid finding is used as a subroutine.
Since millions of cells are sequenced and tens of thousands of gene expressions are measured in such a process, this naturally gives us a large high dimensional dataset.
Since the gene expressions are normalized to a probability distribution for each cell, $\ell_1$ distance is commonly used for clustering \cite{ntranos2016fast_l1RNA}. 
We use the 10xGenomics dataset consisting of 27,998 gene-expressions over 1.3 million neuron cells from the cortex, hippocampus, and subventricular zone of a mouse brain \cite{rnaseqData}.
We test on two subsets of this dataset, a small one of 20,000 cells randomly subsampled, and a larger one of 109,140 cells, the largest true cluster in the dataset.
While we can exactly compute a solution for the 20k dataset, it is computationally difficult to do so for the larger one, so we use the most commonly returned point from our algorithms as ground truth (all 3 algorithms have the same most frequently returned point).

Another dataset we used was the famous Netflix-prize dataset \cite{bennett2007netflix}.
In such recommendation systems, the objective is to cluster users with similar preferences.
One challenge in such problems is that the data is very sparse, with only .21\% of the entries in the Netflix-prize dataset being nonzero.
This necessitates the use of normalized distance measures in clustering the dataset, like cosine distance, as discussed in \cite[Chapter~9]{leskovec2014mining_cosDist}.
This dataset consists of 17,769 movies and their ratings by 480,000 Netflix users.
We again subsample this dataset, generating a small and large dataset of 20,000 and 100,000 users randomly subsampled.
Ground truth is generated as before.

The final dataset we used was the zeros from the commonly used MNIST dataset \cite{mnistData}.
This dataset consists of centered images of handwritten digits.
We subsample this, using only the images corresponding to handwritten zeros, in order to truly have one cluster.
We use $\ell_2$ distance, as root mean squared error (RMSE) is a frequently used metric for image reconstruction.
Combining the train and test datasets we get 6,424 images, and since each image is 28x28 pixels we get $d=784$. 
Since this is a smaller dataset, we are able to compute the ground truth exactly.

\subsection{Discussion on \{$\rho_i$\}}

\begin{figure}[h!]    
    \centering
    \begin{subfigure}[b]{.4\textwidth}
        \includegraphics[width =\textwidth]{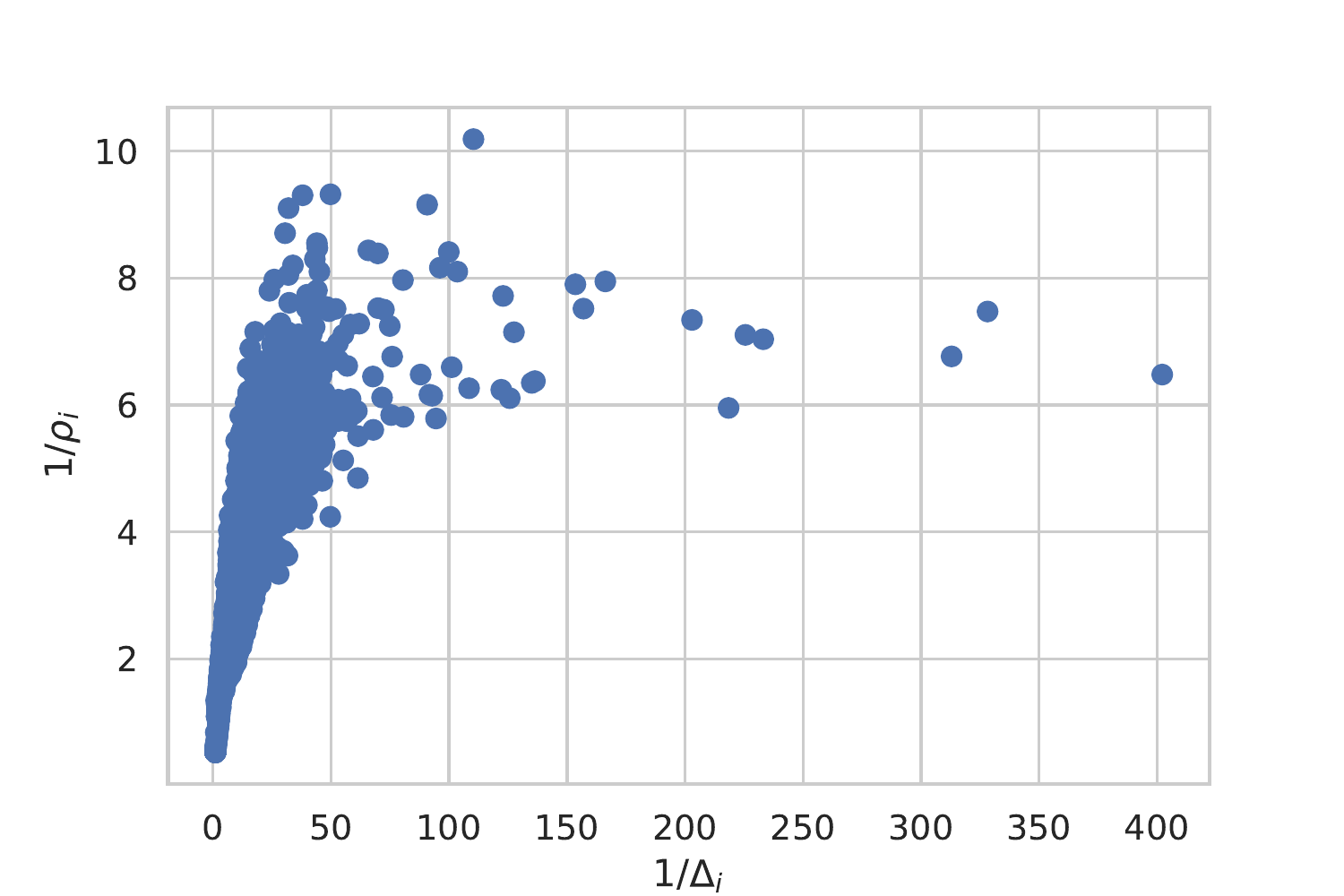}
        \caption{RNA-Seq 20k dataset \cite{rnaseqData}}
    \end{subfigure}
    ~
    \begin{subfigure}[b]{.4\textwidth}
        \includegraphics[width =\textwidth]{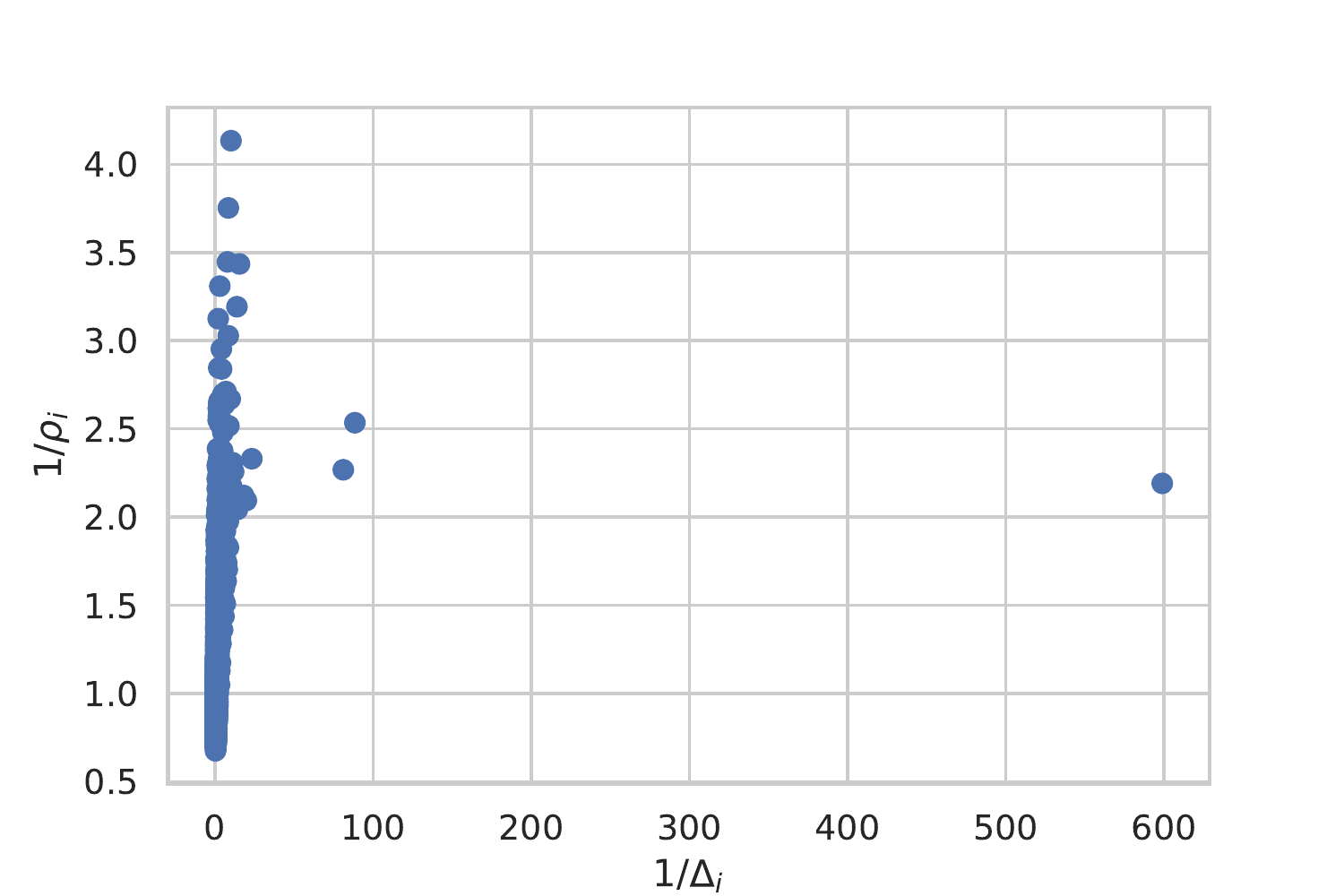}
        \caption{MNIST dataset \cite{mnistData}}
    \end{subfigure}
    \caption{$1/\Delta_i$ vs. $1/\rho_i$ in real world dataset} \label{fig:rhoDelta}
\end{figure}

For correlation to improve our algorithmic performance, we ideally want $\rho_i \ll 1$ and decaying with $\Delta_i$. 
Empirically this appears to be the case as seen in Fig. \ref{fig:rhoDelta}, where
we plot $\rho_i$ versus $\Delta_i$ for the RNA-Seq and MNIST datasets. 
$\frac{1}{\rho_i^2}$ can be thought of as the multiplicative reduction in number of pulls needed to differentiate that arm from the best arm, 
i.e. $\frac{1}{\rho_i} = 10$ roughly implies that we need a factor of 100 fewer pulls to differentiate it from the best arm due to our ``correlation''. 
Notably, for arms that would normally require many pulls to differentiate from the best arm (small $\Delta_i$), $\rho_i$ is also small. Since algorithms spend the bulk of their time differentiating between the top few arms, this translates into large practical gains.

One candidate explanation for the phenomena that small $\Delta_i$ lead to small $\rho_i$ is that the points themselves are close in space.
However, this intuition fails for high dimensional datasets as shown in Fig. \ref{fig:distToMedoid}. 
We do see empirically however that $\rho_i$ decreases with $\Delta_i$, which drastically decreases the number of comparisons needed as desired. 

We can bound $\rho_i$ if our distance function obeys the triangle inequality, as $\hat \Delta_i \triangleq d(x_i,x_J) - d(x_1,x_J)$ is then a bounded random variable since $|\hat \Delta_i| \le d(x_i,x_1)$.
Combining this with the knowledge that $\E \hat \Delta_i = \Delta_i$ we get $\hat \Delta_i$ is sub-Gaussian with parameter at most
\begin{align*}
 \rho_i \sigma \le \frac{2d(x_i,x_1)+\Delta_i}{2}    
\end{align*}
Alternatively, if we assume that $\hat \Delta_i$ is normally distributed with variance $\rho_i^2 \sigma^2$, we are able to get a tighter characterization of $\rho_i$:
\begin{align*}
    \rho_i^2 \sigma^2&= \Var(d(1,J) - d(i,J)) \\
    &= \E \left[ \left(d(1,J) - d(i,J) \right)^2\right] -  \left( \E \left[d(1,J) - d(i,J) \right]\right)^2\\
    &\le d(1,i)^2  - \Delta_i^2  
\end{align*}
We can clearly see that as $d(1,i)\rightarrow 0$, $\rho_i$ decreases, to 0 in the normal case. However in high dimensional datasets $d(1,i)$ is usually not small for almost any $i$. This is empirically shown in Fig. \ref{fig:distToMedoid}.

\begin{figure}[ht!]
\centering
    \begin{subfigure}[b]{.32\textwidth}
        \includegraphics[width=\textwidth]{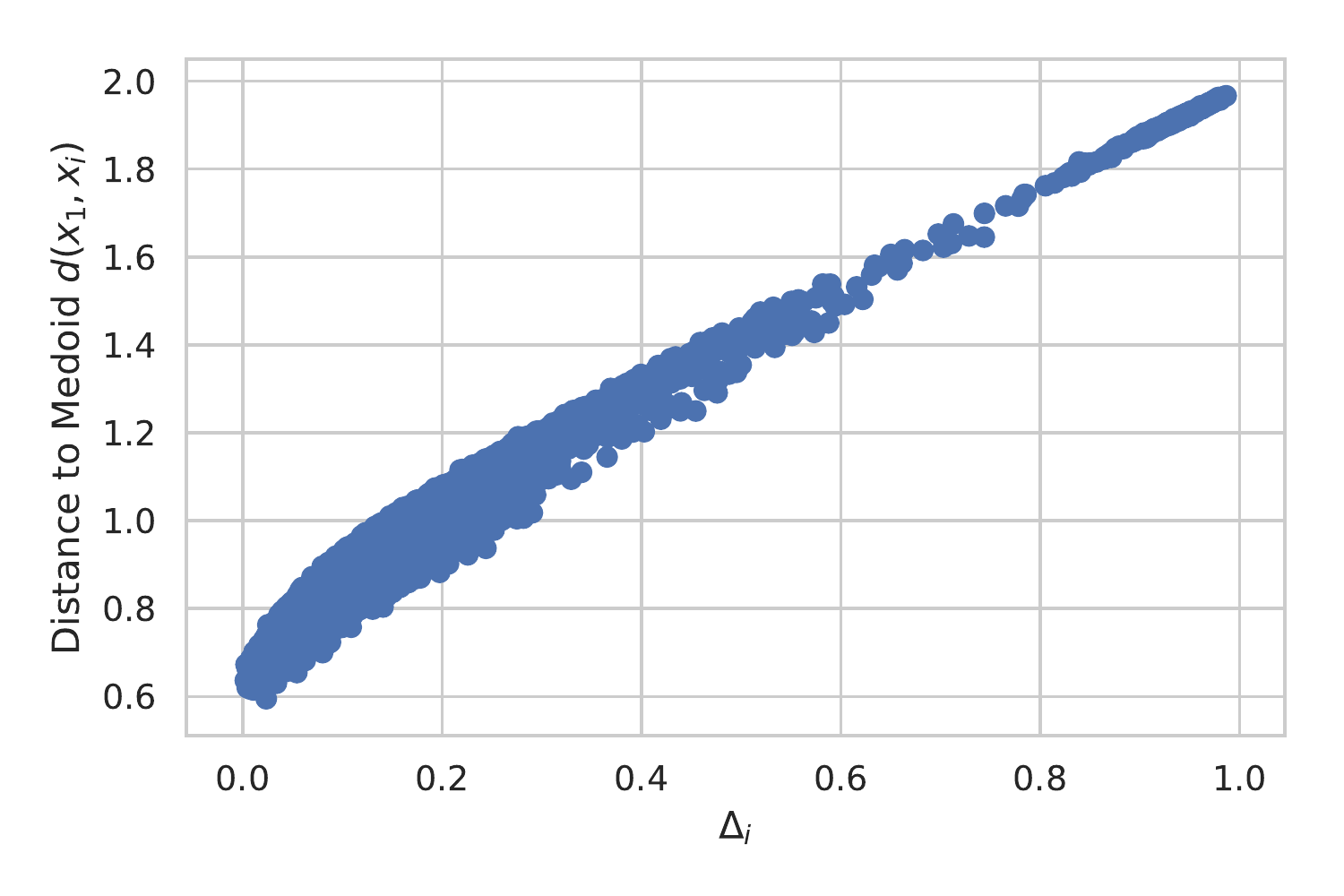}
        \caption{RNA-Seq 20k, $\ell_1$}

    \end{subfigure}
    ~
    \begin{subfigure}[b]{.32\textwidth}
        \includegraphics[width = \textwidth]{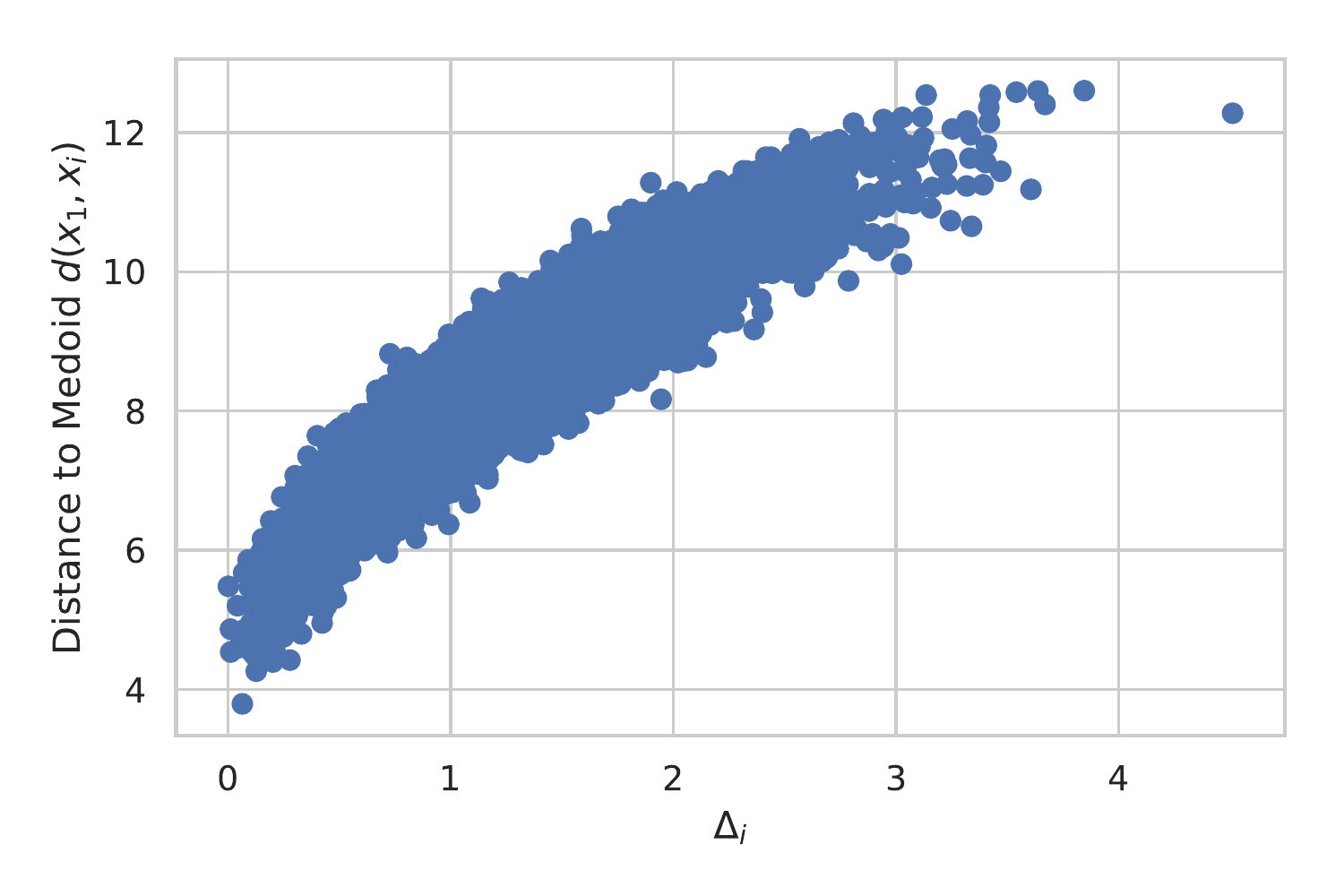}
        \caption{MNIST zeros, $\ell_2$}
    \end{subfigure}
    ~
    \begin{subfigure}[b]{.32\textwidth}
        \includegraphics[width = \textwidth]{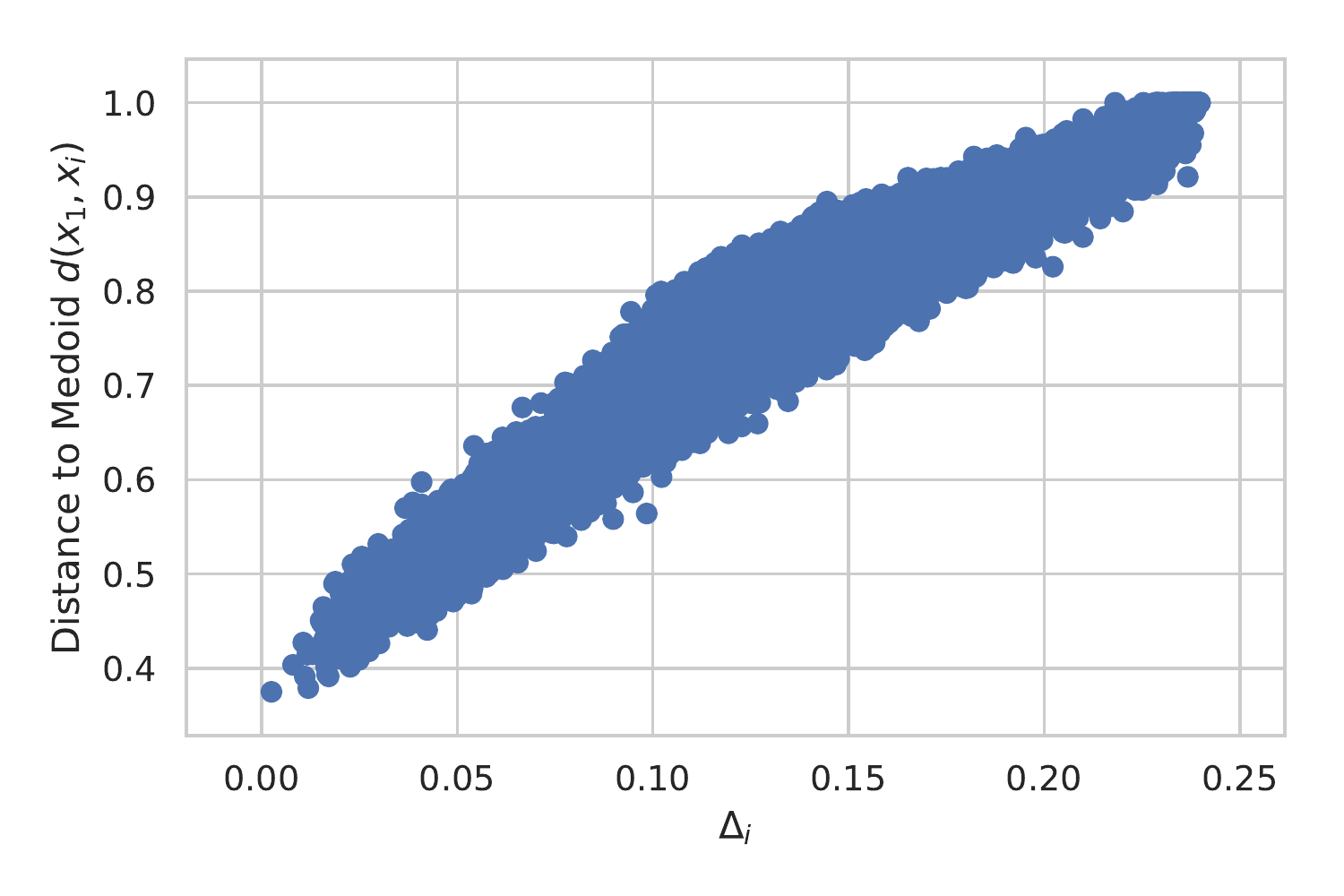}
        \caption{Netflix 20k, cosine distance}
    \end{subfigure}
    \caption{Distance from point $i$ to the medoid, $d(x_1,x_i)$ versus $\Delta_i$} \label{fig:distToMedoid}
\end{figure}

While $\rho_i$ can be small, it is not immediately clear that it is bounded above.
However, since our distances are bounded for any given dataset, we have that $d(1,J)$ and $d(i,J)$ are both $\sigma$-sub-Gaussian  for some $\sigma$, and so we can bound the sub-Gaussian parameter of $d(1,J)-d(i,J)$ quantity using the Orlicz norm.
\begin{align*}
\rho_i^2 \sigma^2 = \|d(1,J) - d(i,J) \|_\Psi^2 \le (\|d(1,J)\|_\Psi + \|d(i,J)\|_\Psi )^2 = 4 \sigma^2
\end{align*}

While this appears to be worse at first glance, we are able to jointly bound $\P(\hat \theta_i - \hat \theta_1 - \Delta_i< -\Delta_i) \le \exp{\left(-\frac{n \Delta_i^2}{2 \rho_i^2 \sigma^2}\right)} 
\le \exp{\left(-\frac{n \Delta_i^2}{8 \sigma^2}\right)}$ by the control of $\rho_i$ above.
In the regular case, this bound is achieved by separating the two and bounding the probability that either $\hat \theta_i< \theta_i - \Delta_i/2$ or $\hat \theta_1 > \theta_1 + \Delta_i/2$, which yields an equivalent probability since we need $\hat \theta_i, \hat \theta_1$ to concentrate to half the original width.
Hence, even for data without significant correlation, attempting to correlate the noise will not increase the number of pulls required when using this standard analysis method.

\subsection{Fixed Budget}

In simulating Correlated Sequential Halving, we swept the budget over a range and reported the smallest budget above which there were 0 errors in 1000 trials. One logical question given a fixed budget algorithm like corrSH is then, for a given problem, what to set the budget to. This is an important question for further investigation, as there does not seem to be an efficient way to estimate $\tilde H_2$.
Before providing our doubling trick based solution, we would like to note that it is unclear what to set the hyperparameters to for any of the aforementioned randomized algorithms.
RAND is similarly fixed budget, and for Med-dit, while setting $\delta = \frac{1}{n}$ achieves vanishing error probability theoretically, using this setting in practice for finite $n$ yields an error rate of $6\%$ for the Netflix 100k dataset.
Additionally, the fixed budget setting makes sense in the case of limited computed power or time sensitive applications.

The approach we propose is a variant of the doubling trick, which is commonly used to convert fixed budget or finite horizon algorithms to data dependent or anytime ones. Here this would translate to running the algorithm with a certain budget $T$ (say $3n$), then doubling the budget to $6n$ and rerunning the algorithm. If the two answers are the same, declare this the medoid and output it. If the answers are different, double the budget again to $12n$ and compare.
The odds that the same incorrect arm is outputted both times is exceedingly small, as even with a budget that is too small, the most likely output of this algorithm is the true medoid.
This requires a budget of at most $8T$ to yield approximately the same error probability as that of just running our algorithm with budget $T$.

\section{Summary}

We have presented a new algorithm, Correlated Sequential Halving, for computing the medoid of a large dataset. We prove bounds on it's performance, deviating from standard multi-armed bandit analysis due to the correlation in the arms. We include experimental results to corroborate our theoretical gains, showing the massive improvement to be gained from utilizing correlation in real world datasets. There remains future practical work to be done in seeing if other computational or statistical problems can benefit from this correlation trick. Additionally there are open theoretical questions in proving lower bounds for this special query model, seeing if there is any larger view of correlation beyond pairwise that is analytically tractable, and analyzing this generalized stochastic bandits setting.

\section*{Acknowledgements}
The authors gratefully acknowledge funding from the NSF GRFP, Alcatel-Lucent Stanford Graduate Fellowship, NSF grant under CCF-1563098, and the Center for Science of Information (CSoI), an NSF Science and Technology Center under grant agreement CCF-0939370.

\clearpage
\newpage

\bibliographystyle{ieeetr} %
\bibliography{mybib}

\clearpage
\newpage

\appendix
\appendixpage

\section{Proof of Theorem \ref{thm}} \label{app:proof}
We assume $n$ is a power of 2 for readability, but the analysis holds for any $n$.
We begin with the following immediate consequence of Hoeffding's inequality, remembering that $|\CJ_r| = t_r$:

\begin{lemma} \label{lem:hoeff}
Assume that the best arm was not eliminated prior to round r. Then for any arm $i \in S_r$
\begin{align*}
    \P\left(\hat \theta_1^{(r)} > \hat \theta^{(r)}_i\right) &= \P\left(\frac{1}{|\CJ_r|}\sum_{j \in \CJ_r} d(x_1,x_j) - d(x_i,x_j)  + \Delta_i> \Delta_i\right) \le \exp{\left(\frac{-t_r\Delta_i^2}{2\rho_i^2 \sigma^2}\right)}\mathbb{I}\{t_r<n\}
\end{align*}
Where if $t_r = n$ we know that this probability is exactly 0 by definition of the medoid.
\end{lemma}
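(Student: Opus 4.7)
The plan is to reduce the stated inequality to a single sub-Gaussian tail bound applied to an i.i.d.-like average. First I would observe that the equality of the two probabilities in the lemma statement is just an algebraic rewriting: subtracting $\hat\theta_i^{(r)}$ from both sides of $\hat\theta_1^{(r)} > \hat\theta_i^{(r)}$ and then adding $\Delta_i$ to both sides gives exactly $\frac{1}{t_r}\sum_{j\in\CJ_r}\bigl[d(x_1,x_j)-d(x_i,x_j)\bigr]+\Delta_i > \Delta_i$. Nothing probabilistic has happened yet. Note that the assumption ``the best arm was not eliminated prior to round $r$'' is what guarantees $1\in S_r$, which is needed for $\hat\theta_1^{(r)}$ to even be defined.

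Next I would set $Z_j \triangleq d(x_1,x_j)-d(x_i,x_j)+\Delta_i$, so that when $j$ is uniform on $[n]$ we have $\E Z_j = \theta_1-\theta_i+\Delta_i = 0$. By Definition~\ref{assume:subG}, each $Z_j$ is sub-Gaussian with parameter $\rho_i\sigma$. The event in question is $\frac{1}{t_r}\sum_{j\in\CJ_r} Z_j > \Delta_i$, i.e., the sample mean of a collection of centered $\rho_i\sigma$-sub-Gaussian variables exceeds $\Delta_i > 0$. If $\CJ_r$ were drawn with replacement, a direct Hoeffding/sub-Gaussian Chernoff bound would yield $\exp\!\bigl(-t_r\Delta_i^2/(2\rho_i^2\sigma^2)\bigr)$.

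The one subtlety is that Algorithm~\ref{alg} draws $\CJ_r$ uniformly \emph{without} replacement from $[n]$. I would handle this by invoking Hoeffding's classical reduction (``Probability Inequalities for Sums of Bounded Random Variables'', 1963, Theorem~4): for any convex function $\phi$, the expectation of $\phi$ applied to a sum sampled without replacement from a finite population is dominated by the same expectation under sampling with replacement. Applying this to $\phi(x)=e^{\lambda x}$ inside the standard Chernoff argument shows that the moment generating function of $\sum_{j\in\CJ_r}Z_j$ under sampling without replacement is bounded by $\exp(t_r\lambda^2\rho_i^2\sigma^2/2)$, which after optimization in $\lambda$ gives exactly the claimed $\exp\!\bigl(-t_r\Delta_i^2/(2\rho_i^2\sigma^2)\bigr)$ bound. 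This step is essentially bookkeeping but is the only place where one must be careful; it is the main (mild) obstacle.

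Finally, for the indicator $\mathbb{I}\{t_r<n\}$: when $t_r=n$, sampling $n$ indices without replacement from $[n]$ forces $\CJ_r=[n]$ deterministically, so $\hat\theta_1^{(r)}=\theta_1$ and $\hat\theta_i^{(r)}=\theta_i$. Since the medoid is assumed unique (so $\Delta_i>0$), the event $\hat\theta_1^{(r)}>\hat\theta_i^{(r)}$ has probability zero, consistent with the indicator multiplying the right-hand side to zero. Putting the two cases together yields the stated bound.
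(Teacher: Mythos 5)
Your proof is correct and takes essentially the same route as the paper, which simply asserts the lemma as an ``immediate consequence of Hoeffding's inequality'' with $|\CJ_r|=t_r$. Your write-up is in fact more careful than the paper's: you explicitly justify the sampling-without-replacement step via Hoeffding's 1963 convex-ordering reduction and spell out the degenerate $t_r=n$ case, both of which the paper leaves implicit.
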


We now examine one round of Correlated Sequential Halving and bound the probability that the algorithm eliminates the best arm at round $r$, recalling that

$$ t_r =\left\{ 1 \vee \left\lfloor \frac{T}{|S_r| \lceil \log n \rceil} \right\rfloor \right\} \wedge n.$$

\begin{lemma}
The probability that the medoid is eliminated in round $r$ is at most 

$$3 \exp{\left(-\frac{T  }{8 \sigma^2 \log n} \cdot \left[ \frac{\Delta_{(i_r)}^2}{i_r \rho_{(i_r)}^2}\right]\right)} \mathbb{I}\{t_r<n\}$$

for $i_r = |S_r|/4 = \frac{n}{2^{r+2}}$
\end{lemma}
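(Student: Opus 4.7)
The plan is to bound the round-$r$ failure event by combining Lemma~\ref{lem:hoeff} with a counting argument that isolates the contribution of arms easy to distinguish from the medoid, then finishing with Markov's inequality. The medoid is removed in round $r$ only if more than $|S_r|/2$ arms of $S_r$ produce a smaller empirical estimate than the medoid in stage $r$, and I want to argue that most of these ``beaters'' must come from the easy half of the $(\cdot)$-ordering, because there are only $i_r$ globally hard arms available.

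Concretely, I would partition the non-medoid arms using the $(\cdot)$-ordering into a hard set $H=\{(2),\dots,(i_r)\}$ of size $i_r-1$ and an easy set $E=\{(i_r+1),\dots,(n)\}$, where $i_r=|S_r|/4$. Since Lemma~\ref{lem:hoeff}'s hypothesis is that the medoid survives to round $r$, the medoid lies in $S_r$ and hence $|S_r\cap H|\le i_r-1$. For the medoid to be eliminated there must be at least $|S_r|/2$ beaters in $S_r$, of which at most $i_r-1$ can come from $H$; consequently at least $|S_r|/2-(i_r-1)=i_r+1$ arms of $S_r\cap E$ must beat the medoid. Let $N_r^{\mathrm{easy}}$ denote the number of such easy beaters.

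For each $j\in S_r\cap E$, monotonicity of the $(\cdot)$-ordering gives $\Delta_j/\rho_j\ge \Delta_{(i_r)}/\rho_{(i_r)}$, and Lemma~\ref{lem:hoeff}, applied with the fresh common reference set $\CJ_r$ drawn independently of $S_r$, yields
$$\P\!\left(\hat\theta_1^{(r)}>\hat\theta_j^{(r)}\,\bigm|\,S_r\right)\le \exp\!\left(-\frac{t_r\Delta_{(i_r)}^2}{2\sigma^2\rho_{(i_r)}^2}\right)=:p_r.$$
Linearity of expectation then gives $\E[N_r^{\mathrm{easy}}\mid S_r]\le |S_r|\,p_r$ regardless of which easy arms populate $S_r$, and Markov's inequality on the event $\{N_r^{\mathrm{easy}}\ge i_r+1\}$ produces a bound of the form $c\cdot p_r$ for a small numerical constant $c$; the stated constant $3$ comes from a slightly tighter version of this Markov step that exploits the fact that the medoid itself occupies one of the $|S_r|$ slots (so $|S_r\cap E|\le |S_r|-1$), making the Markov ratio at most $(|S_r|-1)/(i_r+1)$, which is bounded by $3$ across the relevant range.

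The last step is to substitute the lower bound on $t_r$. Using $T\ge n\log n\ge |S_r|\log n$ together with the floor inequality $\lfloor x\rfloor\ge x/2$ valid for $x\ge 1$, and the fact that the algorithm enforces $t_r\ge 1$ via the $\{1\vee\cdot\}$, one obtains $t_r\ge T/(2|S_r|\log n)=T/(8i_r\log n)$; on the complementary event $t_r=n$ the indicator $\mathbb{I}\{t_r<n\}$ kills the bound, so this case requires no work. Plugging this into the exponent reproduces the stated expression (up to a routine tightening that sharpens the constant from $16$ to $8$ by handling the floor more carefully in the two regimes $t_r\ge T/(4i_r\log n)$ and $t_r=1$ separately). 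The main nuisance I anticipate is not probabilistic but combinatorial bookkeeping: keeping the partition, the random set $S_r$, the floor in $t_r$, and the Markov constant consistent so that the leading factor is exactly $3$; once the hard/easy split is chosen, the rest is a straightforward expectation over the independent fresh reference points $\CJ_r$.
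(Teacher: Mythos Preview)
Your approach is essentially the paper's: set aside the $i_r$ hardest arms, count how many of the remaining ``easy'' arms beat the medoid, bound the expected count via Lemma~\ref{lem:hoeff}, and finish with Markov. The only structural difference is that you exclude the hard arms \emph{globally} (the set $H=\{(2),\dots,(i_r)\}$), whereas the paper excludes the $i_r$ arms with smallest $\Delta_i/\rho_i$ \emph{within} $S_r$, defining $S_r'$ with $|S_r'|=3i_r$ exactly. Both exclusions guarantee the surviving easy arms satisfy $\Delta_j/\rho_j\ge\Delta_{(i_r)}/\rho_{(i_r)}$, so the Hoeffding step is identical.

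Two small issues. First, your Markov bookkeeping is inconsistent: you upper-bound $|S_r\cap E|$ by $|S_r|-1$ (only removing the medoid) while simultaneously using the threshold $i_r+1=2i_r-(i_r-1)$ (which assumes all $i_r-1$ hard arms lie in $S_r$). The ratio $(|S_r|-1)/(i_r+1)=(4i_r-1)/(i_r+1)$ is \emph{not} bounded by $3$; it tends to $4$. The fix is to track $h=|S_r\cap H|$ jointly: the threshold is $2i_r-h$ and $|S_r\cap E|=4i_r-1-h$, so the Markov ratio $(4i_r-1-h)/(2i_r-h)$ is increasing in $h$ and at the worst case $h=i_r-1$ equals $3i_r/(i_r+1)<3$. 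The paper's within-$S_r$ exclusion sidesteps this entirely, yielding $|S_r'|=3i_r$ and threshold $|S_r|/4=i_r$, hence ratio exactly~$3$.

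Second, do not chase the constant $8$: the paper's own derivation (and the downstream Theorem~\ref{thm}) produces $16$ in the denominator, via $t_r\ge T/(2|S_r|\log n)$ and $|S_r|=4i_r$. The $8$ in the lemma statement appears to be a typo, so your ``routine tightening'' is unnecessary and, as written, unsubstantiated.
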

\begin{proof}
The proof follows similarly to that of \cite{Karnin2013}, modulo the interesting feature that if $t_r=n$ there is no uncertainty.
Additionally, the analysis differs in that here we are interested in giving the sample complexity in terms of $\frac{\Delta_{(i)}}{\rho_{(i)}}$ instead of $\Delta_i$, and so instead of removing arms $i$ with low $\Delta_i$ from consideration as in \cite{Karnin2013}, we remove arms with low $\frac{\Delta_{i}}{\rho_{i}}$ for the analysis.

Formally, define $S_r'$ as the set of arms in $S_r$ excluding the $i_r = \frac{1}{4} |S_r|$ arms $i$ with smallest $\frac{\Delta_{i}}{\rho_{i}}$.

We define the random variable $N_r$ as the number of arms in $S_r'$ whose empirical average in round $r$, $\hat \theta_i^{(r)}$, is smaller than that of the optimal arm. We begin by showing that $\E [N_r]$ is small.
\begin{align*}
    \E [N_r] &= \sum_{j \in S_r'} \P \left(\hat \theta^{(r)}_i > \hat \theta^{(r)}_j\right)\\
    &\le \sum_{j \in S_r'} \exp{\left(-\frac{t_r \Delta_j^2}{2\rho_j^2 \sigma^2}\right)} \mathbb{I}\{t_r<n\}\\ 
    &\le \sum_{j \in S_r'} \exp{\left(-\frac{T \Delta_j^2}{4\rho_j^2 \sigma^2 |S_r| \log n}\right)} \mathbb{I}\{t_r<n\}\\
    &\le |S_r'| \max_{j \in S_r'} \exp{\left(-\frac{T \Delta_j^2}{4\rho_j^2 \sigma^2 |S_r|\log n}\right)} \mathbb{I}\{t_r<n\}\\
    &= |S_r'| \exp{\left(-\frac{T  }{16 \sigma^2 \log n} \cdot \frac{1}{i_r}\cdot \min_{j \in S_r'} \left\{\frac{\Delta_j^2}{\rho_j^2}\right\}\right)} \mathbb{I}\{t_r<n\}\\
    &\le |S_r'| \exp{\left(-\frac{T  }{16 \sigma^2 \log n}\cdot \frac{1}{i_r} \cdot \min_{i \ge i_r} \left\{ \frac{\Delta_{(i)}^2}{\rho_{(i)}^2}\right\}\right)} \mathbb{I}\{t_r<n\}\\
    &= |S_r'| \exp{\left(-\frac{T  }{16 \sigma^2 \log n} \cdot \left[ \frac{\Delta_{(i_r)}^2}{i_r \rho_{(i_r)}^2}\right]\right)} \mathbb{I}\{t_r<n\}
\end{align*}

Where in the third line we assumed $T \ge n \log n$ so that 
$$t_r = \left\lfloor \frac{T}{|S_r| \lceil \log n \rceil} \right\rfloor \ge \frac{T}{2|S_r| \lceil \log n \rceil}$$ 
Additionally, in the second to last line we used the fact that due to the removal of arms with small $\frac{\Delta_{(i)}}{\rho_{(i)}}$, for all arms $j \in S_r'$ where $j = (i)$, we have that $i \ge i_r$.

We now see that in order for the best arm to be eliminated in round $r$ at least $|S_r|/2$ arms must have lower empirical averages in round $r$. This means that at least $|S_r|/4$ arms from $S_r'$ must outperform the best arm, i.e. $N_r \ge |S_r|/4 = |S_r'|/3$.

We can then bound this probability with Markov's inequality as below:
\begin{align*}
    \P \left(N_r \ge \frac{1}{3}|S_r'|\right) &\le 3 \E[N_r]/|S_r'| \le 3 \exp{\left(-\frac{T  }{16 \sigma^2 \log n} \cdot \left[ \frac{\Delta_{(i_r)}^2}{i_r \rho_{(i_r)}^2}\right]\right)} \mathbb{I}\{t_r<n\}.
\end{align*}
\end{proof}
We note that $t_r < n$ is a deterministic condition. Via some algebra, we obtain that

$$t_r = \left\lfloor \frac{T}{|S_r| \lceil \log n \rceil} \right\rfloor \le \frac{T}{|S_r| \log n} = \frac{T 2^r}{n \log n} $$

This means that if $r < \log \left(\frac{n^2 \log n}{T} \right)$ then $t_r < n$. To this end we define $r_{max} \triangleq \left\lfloor \log \left(\frac{n^2 \log n}{T} \right) \right\rfloor$ and $i_{r_{max}} \triangleq \frac{n}{2^{r_{max}}} \ge \frac{T}{n \log n}$.
With this in place, we are now able to easily prove Theorem \ref{thm}

\begin{proof}
The algorithm clearly does not exceed its budget of $T$ arm pulls (distance measurements). Further, if the best arm survives the execution of all $\log n$ rounds then the algorithm succeeds as all other arms must have been eliminated. Hence, by a union bound over the stages, our probability of failure (the best arm being eliminated in any of the $\log n$ stages) is at most

\begin{align*}
     3 \sum_{r=1}^{\log n} &\exp{\left(-\frac{T  }{16 \sigma^2 \log n} \cdot \left[ \frac{\Delta_{(i_r)}^2}{i_r \rho_{(i_r)}^2}\right]\right)} \mathbb{I}\{t_r<n\} \\
     &\le 3 \sum_{r=1}^{\log n} \exp{\left(-\frac{T  }{16 \sigma^2 \log n} \cdot \left[ \frac{\Delta_{(i_r)}^2}{i_r \rho_{(i_r)}^2}\right]\right)} \mathbb{I}\left\{r < \log \left(\frac{n^2 \log n}{T} \right)\right\} \\
    & \le 3 \log n \exp{\left(-\frac{T  }{16 \sigma^2 \log n} \cdot \min_{r \le r_{max}}\left[ \frac{\Delta_{(i_r)}^2}{i_r \rho_{(i_r)}^2}\right]\right)} \\
    & \le 3 \log n \exp{\left(-\frac{T  }{16 \sigma^2 \log n} \cdot \min_{i \ge i_{r_{max}}}\left[ \frac{\Delta_{(i)}^2}{i \rho_{(i)}^2}\right]\right)} \\
    &\le 3 \log n \cdot \exp{\left(-\frac{T}{16 \tilde{H}_2 \sigma^2 \log n}\right)}
\end{align*}

We note that in cases where $\frac{\rho_{(i)}^2}{\Delta_{(i)}^2}$ is very large for small $i$, this last line is loose.

\end{proof}

\begin{note}
A standard analysis of this algorithm, ignoring arms with small $\Delta_i$ to create $S_r'$, would yield hardness measure $ H_2' = \max_i \frac{i \rho_{i}^2}{\Delta_{i}^2}$. However, we can see by pigeonhole principle that 
$$\max_i \frac{i \rho_{i}^2}{\Delta_{i}^2} \ge \max_i \frac{i \rho_{(i)}^2}{\Delta_{(i)}^2}$$
\end{note}

\begin{note}
While it is convenient to think of $\CU = \R^d$ and $d(x,y) = \| x-y\|_2$, we note that our results are valid for arbitrary distance functions which may not be symmetric or obey the triangle inequality, like Bregman divergences or squared Euclidean distance.
\end{note}

\section{Lower bounds} \label{sec:beyondPairwise}

It seems very difficult to generate lower bounds for the sample complexity of the medoid problem due to the higher order structure present. 

\subsection{Beyond pairwise correlation} \label{app:beyondPairwiseCorr}
Throughout this work we have discussed the benefits of correlating measurements.
However, the only way in which correlation figured into our analysis was in helping $\hat \theta_i - \hat \theta_1$ concentrate.
Due to this correlation we can show that the difference between estimators concentrates quickly, analyzing pairs of estimators rather than just individual $\hat \theta_i$.
This leads to the natural question, can correlation help beyond just pairs of estimators?

We answer this question in the affirmative. As a concrete example assume that $\{x_i\}_{i=1}^n \in \R^2$ are evenly spaced around the unit circle, and $x_0 = (0,0)$ is the medoid of $\{x_i\}_{i=0}^n$.
For a reference point $x_J$ drawn uniformly at random we define $\hat \Delta_i \triangleq d(x_i,x_J) - d(x_1,x_J)$.

Let $x_i = (1,0)$, $x_k = (-1,0)$. 
We have previously shown that $\hat \Delta_i, \hat \Delta_k$ concentrate nicely.
However, in sequential halving, we are concerned with the probability that over half the estimators appear better than the best estimator, i.e. $\hat \Delta_i < 0$ for many $i$ (more than $n/2$ for the first round).
Many samples are needed to argue that this is small if we assume that the events $\hat \Delta_i < 0$ and $\hat \Delta_k<0$ are independent as is currently being done, but we can clearly see that for $i,k$ as given, $\P\left(\{\hat \Delta_i < 0\} \cap \{\hat \Delta_k<0 \}\right) = 0$ where the probability is taken with respect to the randomness in selecting a common reference point $x_J$. 

\begin{tikzpicture}
    \foreach \x [count=\p] in {0,...,11} {
        \node[shape=circle,fill=black, scale=0.5] (\p) at (-\x*30:2) {};};
    \node[shape=circle,fill=black, scale=0.5,label={left:$x_1$}] (x1) at (0,0) {};
    \node[shape=circle,fill=black, scale=0.5,label={right:$x_2$}] (x2) at (0:2) {};
    \node[shape=circle,fill=black, scale=0.5,label={left:$x_3$}] (x3) at (180:2) {};
    \node[shape=circle,fill=black, scale=0.5,label={left:$x_J$}] (xJ) at (60:2) {};
    \draw (1) arc (0:360:2);
    \draw[dashed] (x1) -- (xJ) -- (x2);
    \draw[dashed] (x3) -- (xJ);
    \draw [dotted, gray] (-3,0) -- (3,0);
    \draw [dotted, gray] (0,-3) -- (0,3);
\end{tikzpicture}

It is not clear what quantities we should be interested in when looking at all the estimators jointly, but it is clear that there are additional benefits to correlation beyond simply the improved concentration of differences of estimators.

\subsection{Bandit lower bounds}
Ideally in such a bandit problem we would like to provide a matching lower bound.
This is made difficult by the fact that we lack insight into which quantities are relevant in determining the hardness of the problem.
A more traditional bandit lower bound was recently proved for adaptive sampling in the approximate $k$-NN case, but this lower bound requires the data points to be constrained, i.e. $[x_i]_j \in \{ \pm 1/2 \}$, and more importantly that the algorithm is only allowed to interact with the data by sampling coordinates uniformly at random \cite{reinhard2019adaptive}.
This second constraint on the algorithm unfortunately removes all the structure we wish to analyze from the problem. 
The lower bound is proved using a change of measure argument, neatly presented in \cite{kaufmann2016complexity}.
In the case we wish to analyze, strategies are no longer limited to random sampling the data, i.e. for a given $x_i$ we can measure its distance to a specific $x_j$, we do not need to independently sample a reference point for each pull.

Currently, we do not know of any data dependent lower bound for this problem. A trivial lower bound is $\Omega(n)$ distance computations, as we need to perform at least one distance computation for every data point. However, we have as of yet been unable to provide any tighter lower bounds in terms of the $\rho_i$'s or any larger scale structure as mentioned above.

\end{document}